\theoremstyle{plain}
\newtheorem{theorem}{Theorem}[section]
\newtheorem{lemma}[theorem]{Lemma}
\theoremstyle{definition}
\theoremstyle{remark}
\definecolor{LightCyan}{rgb}{0.93,1,1}
\definecolor{LLightCyan}{rgb}{0.97,1,1}
\definecolor{LightRed}{rgb}{1,0.95,0.95}
\definecolor{LLightRed}{rgb}{1,0.97,0.97}
\definecolor{Gray}{rgb}{0.8,0.8,0.8}
\definecolor{LightGray}{rgb}{0.92,0.92,0.92}
\definecolor{LLightGray}{rgb}{0.95,0.95,0.95}
\definecolor{mygreen}{rgb}{0.4,0.7,0.306}
\definecolor{myblue}{rgb}{0.294,0.447,0.796}
\algrenewcommand\algorithmicrequire{\textbf{Input:}}
\title{\textbf{Label-Retrieval-Augmented Diffusion Models for Learning from Noisy Labels}}
\author{%
\textbf{Jian Chen}$^{1}$ \quad \textbf{Ruiyi Zhang}$^{2}$ \quad \textbf{Tong Yu}$^2$ \quad \textbf{Rohan Sharma}$^1$ \\
\textbf{Zhiqiang Xu}$^3$ \quad  \textbf{Tong Sun}$^2$ \quad \textbf{Changyou Chen}$^1$\\
$^1$University at Buffalo \quad $^2$Adobe Research
 \quad $^3$MBZUAI \\
\texttt{\{jchen378,rohanjag,changyou\}@buffalo.edu} \\
\texttt{\{ruizhang,tyu,tsun\}@adobe.com} \quad 
\texttt{zhiqiang.xu@mbzuai.ac.ae}
}
\date{}
\begin{document}
\maketitle

\begin{abstract}
  Learning from noisy labels is a long-standing problem in machine learning for real applications. One of the main research lines focuses on learning a label corrector to purify potential noisy labels. However, these methods typically rely on strict assumptions and are limited to certain types of label noise. In this paper, we reformulate the label-noise problem from a generative-model perspective, {\it i.e.}, labels are generated by gradually refining an initial random guess. This new perspective immediately enables existing powerful diffusion models to seamlessly learn the stochastic generative process. Once the generative uncertainty is modeled, we can perform classification inference using maximum likelihood estimation of labels. To mitigate the impact of noisy labels, we propose \textbf{L}abel-\textbf{R}etrieval-\textbf{A}ugmented (LRA) diffusion model \footnote{Code is available at \textcolor{magenta}{\href{https://github.com/puar-playground/LRA-diffusion}{\small https://github.com/puar-playground/LRA-diffusion}}}, which leverages neighbor consistency to effectively construct pseudo-clean labels for diffusion training. Our model is flexible and general, allowing easy incorporation of different types of conditional information, {\it e.g.}, use of pre-trained models, to further boost model performance. Extensive experiments are conducted for evaluation. Our model achieves new state-of-the-art (SOTA) results on all standard real-world benchmark datasets. Remarkably, by incorporating conditional information from the powerful CLIP model, our method can boost the current SOTA accuracy by 10-20 absolute points in many cases.
\end{abstract}

\section{Introduction}
\label{Introduction}
Deep neural networks have achieved extraordinary accuracy on various classification tasks. These models are typically trained through supervised learning using large amounts of labeled data. However, large-scale data labeling could cost huge amount of time and effort, and is prone to errors caused by human mistakes or automatic labeling algorithms ~\cite{yao2021instance}. In addition, research has shown that the ability of deep neural network models to fit random labels can result in reduced generalization ability when learning with corrupted labels ~\cite{zhang2021understanding}. Therefore, robust learning methods using noisy labels are essential for applying deep learning models to cheap yet imperfect datasets for supervised learning tasks.

There are multiple types of label noise investigated by previous research. More recent research has focused on studying the more realistic feature-dependent label noise, where the probability of mislabeling a given instance depends on its characteristics. This type of noise is more consistent with label noise in real-world datasets ~\cite{xia2020part, cheng2020learning, yao2021instance, berthon2021confidence, cheng2022instance, wu2022fair}. 
To address this type of noise, a model is expected to be able to estimate the uncertainty of each training label. Many state-of-the-art methods have primarily relied on the observation that deep neural networks tend to learn simple patterns before memorizing random noise ~\cite{arpit2017closer, li2020gradient}. This means a temporary phase exists in the learning process, where the model has learned useful features but has yet not started overfitting corrupt labels. At this stage, the model predictions can be used to modify the training labels, so that they are more consistent with model predictions ~\cite{tanaka2018joint, wang2018iterative, jiang2018mentornet, li2019learning, zheng2020error, zhang2021learning}. By correctly modifying the labels, the number of clean training sample increases, which can further benefit the training. These type of approaches, however, is inherently risky because the point at which the model starts to overfit varies with the network structure and dataset. Starting too early can corrupt the training data, while starting too late may not prevent overfitting ~\cite{prechelt2002early}. Therefore, it is vital to carefully tune the hyper-parameters of the training strategy, such as the number of epochs for warm-up training, learning rate, and uncertainty threshold, to achieve successful training results.

Another class of methods adopts the assumption of label propagation in semi-supervised learning ~\cite{kipf2016semi, iscen2019label}, where nearby data points in a feature space tend to have the same label. Therefore, they use {\em neighbor consistency}\footnote{Nearby data points tend to have the same label.} regularization to prevent overfitting of the model ~\cite{iscen2022learning, cheng2022neighbour}. The performance highly depends on the quality of the encoder that maps the data to the feature space, as retrieving a neighbor that belongs to a different class could further mislead the training process. Encoders are therefore required to first learn high-level features of the data that can be used for classification, which could be trained simultaneously with the classifier using noisy labels. However, the training can also lead to overfitting or underfitting. 

In this paper, by contrast, we formulate the label noise problem from a generative-model perspective, which naturally provides new insights into approaching the problem. Our intuition is to view the noisy labeling process as a stochastic label generation process. Thus, we propose to adopt the powerful diffusion model as the generative building block. Figure~\ref{fig:diffusion} illustrates our intuition. In the generative process, we start with a noisy estimation of the label, then gradually refine it to recover the clean label, which is equivalent to the reverse denoising process of the diffusion model.

\begin{figure}[!t]
    \centering
  \includegraphics[width=0.7\textwidth]{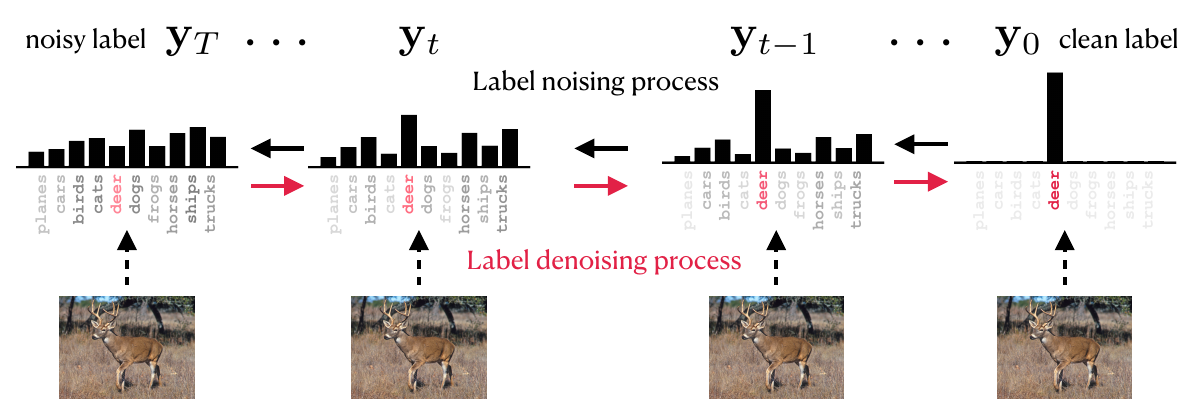}
  \vspace{-1em}
  \caption{Label denoising as a reverse noising process.}
  \label{fig:diffusion}
\end{figure}
Specifically, the diffusion model takes a noisy label and some useful conditional information (to be specified) as inputs, and learns to recover/generate the ground-truth labels as outputs. One challenge in this setting is that only noisy labels are available in practice for training. To overcome this issue, we adopt the principle of {\em neighbor consistency}, and propose {\em label-retrieval augmentation} to construct pseudo clean labels for diffusion model training, where a pre-trained image encoder is used to define the neighborhood of a sample. It is worth noting that the pre-trained image encoder would not be affected by the label noise, because they can be trained in a self-supervised manner ~\cite{chen2020simple, zhong2022self} or on an additional clean dataset ~\cite{he2015delving, radford2021learning}. In fact, pre-training can tremendously improve the model’s adversarial robustness ~\cite{hendrycks2019using} and has been used to improve model robustness to label corruption ~\cite{sukhbaatar2014training}. Another merit of our design is that it is general enough to allow natural incorporation of powerful large pre-trained model such as the CLIP model to further boost the performance. 

In addition, the probability nature of diffusion models can also be better equipped to handle uncertainty in the data and label, thus providing more robust and accurate predictions. We call our model LRA-diffusion (label-retrieval-augmented diffusion). 

Our main contributions are summarized as follows:
\vspace{-0.2em}
\begin{itemize}
    \item We formulate learning from noisy labels as modeling a stochastic process of conditional label generation, and propose to adopt the powerful diffusion model to learn the conditional label distribution.
    \item We incorporate the neighbor consistency principle into the modeling, and design a novel label-retrieval-augmented diffusion model to learn effectively from noisy label data.
    \item We further improve our model by incorporating auxiliary conditional information from large pre-trained models such as CLIP.
    \item Our model achieves the new state-of-the-art (SOTA) in various real-world noisy label benchmarks, 
    {\it e.g.}, 20\% accuracy improvement on noisy CIFAR-100 benchmark.
\end{itemize}

\section{Preliminary}
\label{Preliminary}
Diffusion models were initially designed for generative modeling. Recently, it has been extended for classification and regression problems. In this section, we introduce the Classification and Regression Diffusion Models (CARD) ~\cite{han2022card}, which our model is based on.

The CARD model transforms deterministic classification into a conditional label generation process, allowing for more flexible uncertainty modeling in the labeling process ~\cite{han2022card}. 
Similar to the standard diffusion model, CARD consists of a forward process and a reverse process. In the forward process, an $n$-dimensional one-hot label $\mathbf{y}_0$ is gradually corrupted to a series of intermediate random vectors $\mathbf{y}_{1:T}$, which converges to a random variable with a multi-variant Gaussian distribution $\mathcal{N}(f_q(\mathbf{x}), \mathbf{I})$ (latent distribution) after $T$ steps, where the mean is defined by a pre-trained $n$-dimensional image encoder $f_q$. The transition steps between adjacent intermediate predictions is modeled as Gaussian distributions, $q(\mathbf{y}_{t} | \mathbf{y}_{t-1}, f_q) = \mathcal{N}(\mathbf{y}_t; {\boldsymbol \mu}_{t}, \beta_t \mathbf{I})$, with mean values ${\boldsymbol \mu}_1, \cdots, {\boldsymbol \mu}_T$ and a variance schedule $\beta_1, \cdots, \beta_T$, where ${\boldsymbol \mu}_t = \sqrt{1-\beta_t} \mathbf{y}_{t-1} + (1 - \sqrt{1-\beta_t})f_q(\mathbf{x})$. 
This admits a closed-form sampling distribution, $q(\mathbf{y}_{t} |\mathbf{y}_{0}, f_q) = \mathcal{N}(\mathbf{y}_{t}; {\boldsymbol \mu}_t, (1 - \bar{\alpha}_t)\mathbf{I}$, with an arbitrary timestep $t$ and ${\boldsymbol \mu}_t = \sqrt{\bar{\alpha}_t} \mathbf{y}_{0} + (1-\sqrt{\bar{\alpha}_t})f_q(\mathbf{x})$. 
The mean term can be viewed as an interpolation between true data $\mathbf{y}_0$ and the mean of the latent distribution $f_q(\mathbf{x})$ with a weighting term $\bar{\alpha}_t = \prod_{t} (1-\beta_t)$.

In the reverse (generative) process, CARD reconstructs a label vector $\mathbf{y}_0$ from an $n$-dimensional Gaussian noise $y_T \sim \mathcal{N}(f_q(\mathbf{x}), \mathbf{I})$ by approximating the denoising transition steps conditioned on the data point $\mathbf{x}$ and another pre-trained image encoder $f_p$ in an arbitrary dimension. The transition step is also Gaussian for an infinitesimal variance $\beta_t$ ~\cite{feller2015theory} (define $\Tilde{\beta}_t = \frac{1 - \bar{\alpha}_{t-1}}{1 - \bar{\alpha}_t} \beta_t$):
\begin{equation}
\label{eq:reverse_1_step}
    p_{\theta}(\mathbf{y}_{t-1} | \mathbf{y}_{t}, \mathbf{x}, f_p) = \mathcal{N}(\mathbf{y}_{t-1}; {\boldsymbol \mu}_{\theta} (\mathbf{y}_{t}, \mathbf{x}, f_p, t), \Tilde{\beta}_t \mathbf{I}).
\end{equation}

The diffusion model is learned by optimizing the evidence lower bound with stochastic gradient descent:
\begin{equation}
\label{eq:loss}
\mathcal{L}_{\text{ELBO}} = \mathbb{E}_q \left [ \mathcal{L}_{T} + \sum_{t>1}^{T}\mathcal{L}_{t-1} + \mathcal{L}_{0} \right ], \text{ where }
\end{equation}
\begin{align*}
    &\mathcal{L}_{0} = -\log p_{\theta}(\mathbf{y}_0 | \mathbf{y}_1, \mathbf{x}, f_p),~~\mathcal{L}_{t-1} = D_{\text{KL}} \left ( q(\mathbf{y}_{t-1} | \mathbf{y}_t, \mathbf{y}_0, \mathbf{x}, f_q) || p_{\theta}(\mathbf{y}_{t-1} | \mathbf{y}_t, \mathbf{x}, f_p) \right ) \nonumber\\ 
&\mathcal{L}_{T} = D_{\text{KL}} \left ( q(\mathbf{y}_{T} | \mathbf{y}_0, \mathbf{x}, f_q) || p(\mathbf{y}_{T} | \mathbf{x}, f_p) \right ). \nonumber
\end{align*}

Following ~\cite{ho2020denoising}, the mean term is written as $\boldsymbol{\mu}_{\theta}(\mathbf{y}_{t}, \mathbf{x}, f_p, t) = \frac{1}{\sqrt{\alpha_t}} (\mathbf{x}_t - \frac{\beta_t}{\sqrt{1-\bar{\alpha}_t}} {\boldsymbol{\epsilon}}_{\theta}(\mathbf{y}_{t}, \mathbf{x}, f_p, t))$ and the objective can be simplified to $\mathcal{L}_{\text{simple}} = \begin{Vmatrix} \boldsymbol \epsilon - \boldsymbol \epsilon_{\theta} (\mathbf{y}_{t}, \mathbf{x}, f_p, t) \end{Vmatrix}^2$.

\section{Label-Retrieval-Augmented Diffusion Model}
\label{Method}
Inspired by CARD, Label-Retrieval-Augmented (LRA) diffusion models reframe learning from noisy labels as a stochastic process of conditional label generation (\textit{i.e.}, label diffusion) process. 
In this section, we first provide an overview of the our model in Section \ref{sec:model overview} and then introduce the proposed label-retrieval-augmented component in Section \ref{knn_prior}, which can leverage label consistency in the training data. Next, we introduce an accelerated label diffusion process to significantly reduce classification model inference time in Section \ref{DDIM}. Finally, a new conditional mechanism is proposed to enable the usage of pre-trained models in Section \ref{sec:encoders condition}.

\subsection{Model Overview}\label{sec:model overview}
Our overall label-retrieval-augmented diffusion model is illustrated in Figure~\ref{fig:main}, where a diffusion model is adopted for progressively label denoising, by leveraging both the retrieved labels and auxiliary information from pre-trained models. Our model employs two pre-trained networks, denoted as $f_q$ and $f_p$ encoders, to encode conditional information that facilitates the generation process. The $f_q$ encoder serves as a mean estimator for $\mathbf{y}_T$, providing an initial label guess for a given image. This encoder could be a standard classifier trained on noisy labels. On the other hand, the $f_p$ encoder operates as a high-dimensional feature extractor, assisting in guiding the reverse procedure. $\mathbf{y}_t$ and $f_p(\mathbf{x})$ are concatenated together before being processed. Details about our neural-network architecture design are provided in Supplementary \ref{Appendix-training_detail}.

During training, we use labels retrieved from the neighborhood as the generation target $\mathbf{y}_0$. Then, in the forward process, the distribution of neighboring labels is progressively corrupted towards a standard Gaussian distribution centered at the estimated mean $f_q(\mathbf{x})$. During testing, we employ a generalized DDIM method to efficiently compute the maximum likelihood estimation of $\mathbf{y}_0$.

\begin{figure*}[!ht]
    \centering
  \includegraphics[width=1\textwidth]{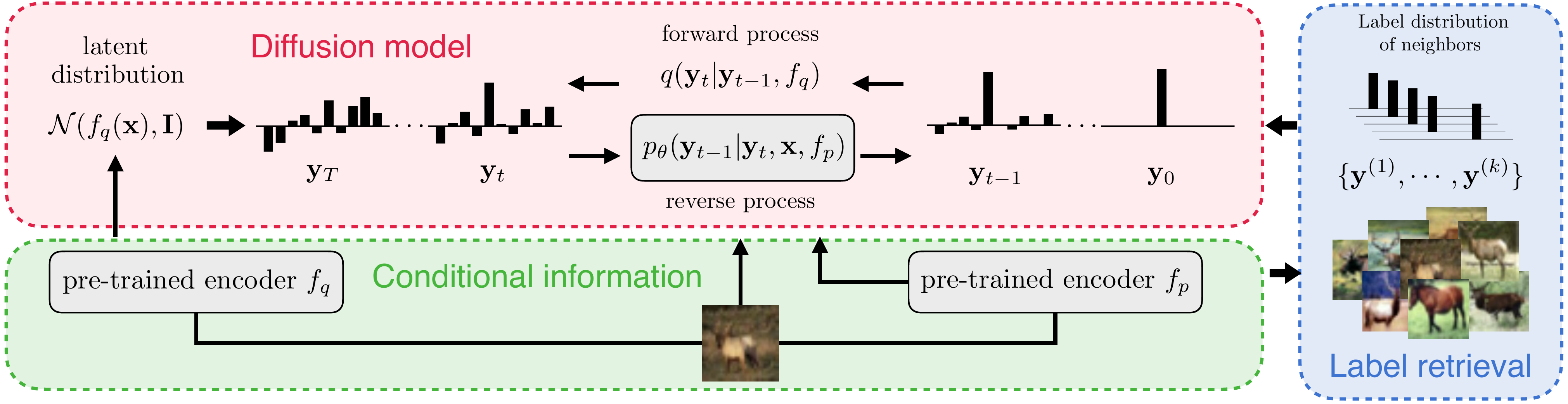}
  \vspace{-1em}
  \caption{Overview of the proposed framework for improving learning performance from noisy labels. The figure depicts the three main components, including (1) using diffusion models to imitate and inverse the {\color{red} label noising process}; (2) using {\color{mygreen} pre-trained encoders} (\textit{i.e.}, $f_q$ and $f_p$) within the diffusion model, and (3) the {\color{myblue} label-retrieval-augmentation} approach using the $f_p$ encoder to encourage neighbor consistency of image labels.}
  \label{fig:main}
\end{figure*}

\subsection{Label-retrieval Augmentation for Training} \label{knn_prior}
The noisy nature of labels excludes the availability of clean labels for training. 
To mitigate the issue, we propose a training strategy based on the concept of retrieval augmented learning \cite{blattmann2022semi, long2022retrieval}, such that it is more resistant to label noise. Our main assumption is that in a latent space, data points from different classes form distinctive clusters. Therefore, the majority of a data point's neighbors are expected to have the same label as the point itself. To this end, we used a pre-trained encoder, illustrated as $f_p$ in Figure~\ref{fig:main}, to map the data into an embedding space and retrieve a label $y'$ from labels of the $k$ nearest neighbors $\{y^{(1)}, \cdots, y^{(k)}\}$ in the training set. The diffusion model was then trained to learn the conditional distribution $p(y'|\mathbf{x})$ of labels within the neighborhood, rather than the distribution of labels $p(y|\mathbf{x})$ for the data point itself. We empirically select the value of $k$ based on the KNN accuracy obtained on the validation data, as detailed in Supplementary Section \ref{Appendix-training_detail}.

Label-retrieval augmentation enables the model to make use of the information from multiple and potentially more accurate labels to improve its prediction performance. Algorithm \ref{alg:LRA_training} describes the training procedure. Additionally, diffusion models are known to be effective at modeling multimodal distributions. By training the model to generate different labels from neighbors based on the same data point, the model can produce stochastic predictions based on the distribution to capture the uncertainty inherent in the data labeling process. As a result, the trained model can be used not only as a classifier, but also as a sampler that simulates the actual labeling process. 

\begin{algorithm}[!ht]
   \caption{Training}
   \label{alg:LRA_training}
\begin{algorithmic}[1]
   \Require training set $\{\mathbf{X}, \mathbf{Y}\}$, image encoder $f_p$, $f_q$.
   \While{not converged}
   \State Sample data $(\mathbf{x}, y) \sim \{\mathbf{X}, \mathbf{Y}\}$; time slice $t \sim \{1, \cdots, T\}$; and noise $\boldsymbol \epsilon \sim \mathcal{N}(0, \mathbf{I})$
   \State Retrieve labels $\{y^{(1)}, \cdots, y^{(k)}\}$ of neighbors in the feature space defined by the encoder $f_p$
   \State Sample $y' \sim \{y, y^{(1)}, \cdots, y^{(k)}\}$, 
   and convert it to a one-hot vector $\mathbf{y}_0$
   \State Take gradient descent step on the loss:
   \begin{equation*}
        \begin{Vmatrix} \boldsymbol \epsilon - \boldsymbol \epsilon_{\theta} (\sqrt{\bar{\alpha}_t} \mathbf{y}_0 + (1 - \sqrt{\bar{\alpha}_t})f_q(\mathbf{x}) + \sqrt{1 - \bar{\alpha}_t} \boldsymbol\epsilon, \mathbf{x}, f_p(\mathbf{x}), t) \end{Vmatrix}^2
    \end{equation*}
    \EndWhile
\end{algorithmic}
\end{algorithm}

\subsection{Efficient Inference with generalized DDIM} \label{DDIM}
The iterative generation nature of the classification diffusion model makes its inference efficiency not comparable to traditional classifiers. To enhance the inference efficiency, we propose to incorporate the efficient sampling methods, Denoising Diffusion Implicit Model (DDIM) \cite{song2020denoising}, to accelerate the label diffusion process. However, the utilization of the mean estimator $f_q$ makes DDIM incompatible with our setting, as our generation process begins with a non-zero mean Gaussian distribution $\mathcal{N}(f_q(\mathbf{x}), \mathbf{I})$. Therefore, we adjust the DDIM method into a more general form that fits our framework. Analogous to DDIM, our sampling process maintains the same marginal distribution as the original $q(\mathbf{y}_{t} |\mathbf{y}_{0}, f_q)$ closed-form sampling process. Detailed derivations can be found in Supplementary \ref{Appendix-A}. 

With DDIM, the trained model can generate a label vector in much less steps following a pre-defined sampling trajectory of $\{T = \tau_S >, \cdots ,> \tau_1 = 1\}$, where $S<T$. 
Consequently, $\mathbf{y}_t$ can be computed as:
\begin{equation}
    \mathbf{y}_{\tau_{s}} = \sqrt{\bar{\alpha}_{\tau_{s}}} \mathbf{y}_{0} + (1-\sqrt{\bar{\alpha}_{\tau_{s}}})f_q(\mathbf{x}) + \sqrt{1-\bar{\alpha}_{\tau_{s}}}\boldsymbol \epsilon,
\end{equation}
where ${\boldsymbol \epsilon} \sim \mathcal{N}(\mathbf{0}, \mathbf{I})$. Similar to CARD \cite{han2022card}, we predict the \textit{denoised label} $\mathbf{\Tilde{y}}_0$, a prediction of $\mathbf{y}_0$ given $\mathbf{y}_{\tau_{s}}$, as:
\begin{equation}
    \mathbf{\Tilde{y}}_0 = \frac{1}{\sqrt{\bar{\alpha}_{\tau_{s}}}} [\mathbf{y}_{\tau_{s}} - (1-\sqrt{\bar{\alpha}_{\tau_{s}}})f_q(\mathbf{x}) -\sqrt{1-\bar{\alpha}_{\tau_{s}}}\boldsymbol\epsilon_{\theta}(\mathbf{y}_{\tau_{s}}, \mathbf{x}, f_p(\mathbf{x}), \tau_{s}) ]~.
\end{equation}
When $\tau_{s-1}>0$, we can compute $\mathbf{y}_{\tau_{s-1}}$ given $\mathbf{y}_{\tau_{s}}$ from the non-Markovian forward process defined as:
\begin{equation}
    \mathbf{y}_{\tau_{s-1}} = \sqrt{\bar{\alpha}_{\tau_{s-1}}} \mathbf{\Tilde{y}}_0 + (1-\sqrt{\bar{\alpha}_{\tau_{s-1}}})f_q(\mathbf{x}) + \sqrt{1-\bar{\alpha}_{\tau_{s-1}}} \cdot {\boldsymbol \epsilon}_{\theta}(\mathbf{y}_{\tau_{s}}, \mathbf{x}, f_p(\mathbf{x}), \tau_{s}).
\end{equation}

As the dimension of label vectors is usually much lower than that of an image, the model can employ fewer steps in the reverse process without compromising generative quality. In our experiments, we use $S=10$ and $T=1000$, substantially reducing the time cost of the classification process. Supplementary Figure \ref{Appendix-B} gives an example of the label generation (classification) process on the CIFAR-10 dataset.

To further enhance the inference efficiency, we propose a simple and effective trick for computing the maximum likelihood estimation of labels. As the generative process is deterministic given $\mathbf{y}_T$, which is sampled from a uni-modal Gaussian distribution, we approximate the maximum likelihood estimation of labels by initiating from the mean, \textit{i.e.}, $\mathbf{y}_0=\text{DDIM}(\mathbf{y}_T=f_q(\mathbf{x}), \mathbf{x})$. This trick circumvents the time-consuming majority voting approximation that demands repeated generation.

\subsection{Flexible conditioning with pre-trained encoders}\label{sec:encoders condition}
The original CARD model employs a single model for both the $f_p$ and $f_q$ encoders. However, this limits their representation capacity \cite{chen2020big} as the dimension of $f_q(\mathbf{x})$ is typical relatively small, {\it i.e.}, equalling the number of classes. To mitigate this and improve model performance, we abandon the assumption that $f_p = f_q$, enabling the use of a more powerful pre-trained encoder ({\it e.g.}, the CLIP image encoder \cite{radford2021learning}) with arbitrary dimensions for $f_p$. 

Empirically, we find that the model can still achieve satisfactory performance when the magnitude of $f_q(\mathbf{x})$ is small, which means the latent representation $\mathbf{y}_T = f_q(\mathbf{x}) + \boldsymbol \epsilon$ is dominated by the noise term $\boldsymbol \epsilon \sim \mathcal{N}(\mathbf{0}, \mathbf{I})$. In this case, the information provided by $f_q(\mathbf{x})$ to the diffusion process is limited. As a result, we simply set $f_q(\mathbf{x})=\mathbf{0}$ to avoid  handling an additional n-dimensional $f_q$ encoder. For the $f_p$ encoder, one can employ flexible pre-trained models as presented in Section \ref{Experiments}. In this paper, we use the SimCLR model trained on the training images (without supervised information) and the pre-trained CLIP model.

\section{Related work}
\label{Related_work}
\paragraph{Robust loss function and regularization techniques.}
Several noise-robust loss functions and regularization techniques have been proposed as alternatives to the commonly used cross-entropy loss (CE), which is not robust to label noise. Mean absolute error (MAE) ~\cite{ghosh2017robust} loss has been shown to be robust against noisy labels. Generalized Cross-Entropy (GCE) ~\cite{zhang2018generalized} combines CE and MAE for faster convergence and better accuracy. Symmetric cross-entropy Learning (SL) ~\cite{wang2019symmetric} couples CE with a noise-robust counterpart and has been found to have higher performance than GCE, particularly for high noise rates. Label Smoothing Regularization ~\cite{szegedy2016rethinking} alleviates overfitting by linearly combining labels with a uniform distribution. Bootstrapping technique ~\cite{reed2014training} combines the labels with the current model prediction. Dynamic bootstrapping ~\cite{arazo2019unsupervised, yang2020webly} uses the prediction confidence to control the weighting in the combination. Neighbor Consistency Regularization (NCR) ~\cite{iscen2022learning} encourages consistency of prediction based on learned similarity. Our method is also based on the principle of neighbor consistency. However, instead of encouraging consistent predictions among neighbors, our model directly learns from the labels of neighbors. This allows for estimating instance-level uncertainty by learning the label distribution among neighbors, rather than learning a point estimation.

\paragraph{Data recalibration.}
Data recalibration techniques progressively remove or correct mislabeled data during training to improve the reliability of training data. Wang et al. ~\cite{wang2018iterative} used the learned similarity and label consistency to identify and discard data with noisy labels. TopoFilter ~\cite{wu2020topological} selects clean data by analyzing the topological structures of the training data in the learned feature space. Cheng et al. ~\cite{cheng2020learning} defines a Bayes optimal classifier to correcct labels. Zheng et al. ~\cite{zheng2020error} proposed using a likelihood ratio test (LRT) to correct training labels based on predictions. Zhang et al. ~\cite{zhang2021learning} used LRT to correct labels progressively and provides a theoretical proof for convergence to the Bayes optimal classifier. Dividemix ~\cite{li2020dividemix}, LongReMix ~\cite{cordeiro2023longremix}, and CC ~\cite{zhao2022centrality} treat the low confident data as unlabeled, and then employ semi-supervised learning algorithms ~\cite{berthelot2019mixmatch} for further analysis. C2D ~\cite{zheltonozhskii2022contrast} combines Dividemix with self-supervised pre-training to boost its performance by improving the quality of the extracted features. Our approach employs the same assumption as TopoFilter that data belonging to the same class should be clustered together with ideal feature representations. However, our technique isn't confined to learned features potentially distorted by label noises. Instead, similar to C2D, our method can effectively leverage the high-quality feature learned by pre-trained encoders to achieve superior accuracy.

\paragraph{Guided diffusion model and retrieval augmentation.}
Guided diffusion is a technique applied to diffusion models for conditional generation. Classifier guidance ~\cite{dhariwal2021diffusion} is an cost-effective method leveraging the gradient of a classifier to steer the generative process of a trained diffusion model. On the other hand, Classifier-free guidance ~\cite{ho2022classifier} learns the conditional distribution during training for improved generation quality. This approach also allows for the use of continuous guidance information, such as embedding vectors, rather than being limited to discrete labels. Classification and Regression Diffusion Models (CARD) ~\cite{han2022card} formulates classification and regression as a conditional generation task that generates labels or target variables conditioned on images. Our approach follows the same paradigm, and leverages the multi-modal coverage ability of diffusion models to learn the label distribution within the neighborhood. SS-DDPM ~\cite{okhotin2023star} proposes a diffusion model with arbitrary noising distributions defined on constrained manifold, e.g., the probabilistic simplex for label generation. Retrieval-augmented diffusion models ~\cite{blattmann2022semi} used retrieved neighbors from an external database as conditional information to train diffusion models for image synthesis. Retrieval Augmented Classification ~\cite{long2022retrieval} used retrieval-augmentation to train classification model using class-imbalanced training data. Our approach differs from theirs by retrieving labels instead of data to reduce label noise in training rather than increasing the training data. In addition, our model does not require an external database.

\section{Experiments}
\label{Experiments}
We first evaluate the performance of our method on datasets with various types synthetic noises. Then, we perform experiments on four real-world datasets. To better understand the performance gain sources, we conduct ablation studies to measure the impacts of conditional diffusion and different pseudo-label construction strategies. All experiments were done on four NVIDIA Titan V GPUs. Comprehensive implementation details and hyper-parameters are provided in the Supplementary \ref{Appendix-training_detail}. 
\begin{table*}[!ht]
\centering
\fontsize{9}{11}\selectfont
\setlength\arrayrulewidth{0.6pt}
\caption{Classification accuracy (\%) on CIFAR-10 and CIFAR-100 datasets under PMD noises and hybrid noises, combining PMD noise with Uniform (U) and Asymmetric (A) noise.}
\vspace{0.4em}
\label{tab:cifar-table}
\resizebox{\textwidth}{!}{%
\begin{tabular}{cccccc}
\hhline{------}
\rowcolor{LightGray} & \multicolumn{5}{c}{\textbf{CIFAR-10}} \\ \cline{2-6}
\rowcolor{LightGray} \multirow{-2}{*}{\textbf{Methods}} & 35\% PMD & 70\% PMD & 35\% PMD + 30\% U & 35\% PMD + 60\% U & 35\% PMD + 30\% A \\ \hline
\multicolumn{1}{c|}{Standard} & 78.11 ± 0.74 & 41.98 ± 1.96 & 75.26 ± 0.32 & 64.25 ± 0.78 & 75.21 ± 0.64 \\
\multicolumn{1}{c|}{Co-teaching+ \cite{yu2019does}} & 79.97 ± 0.15 & 40.69 ± 1.99 & 78.72 ± 0.53 & 55.49 ± 2.11 & 75.43 ± 2.96 \\
\multicolumn{1}{c|}{GCE \cite{zhang2018generalized}} & 80.65 ± 0.39 & 36.52 ± 1.62 & 78.08 ± 0.66 & 67.43 ± 1.43 & 76.91 ± 0.56 \\
\multicolumn{1}{c|}{SL \cite{wang2019symmetric}} & 79.76 ± 0.72 & 36.29 ± 0.66 & 77.79 ± 0.46 & 67.63 ± 1.36 & 77.14 ± 0.70 \\
\multicolumn{1}{c|}{LRT \cite{zheng2020error}} & 80.98 ± 0.80 & 41.52 ± 4.53 & 75.97 ± 0.27 & 59.22 ± 0.74 & 76.96 ± 0.45 \\
\rowcolor{LLightGray} \multicolumn{1}{c|}{CC \cite{zhao2022centrality}} & 81.23 ± 0.78 & 42.43 ± 1.56 & 79.6 ± 0.44 & 70.71 ± 0.34 & 78.66 ±  0.66 \\
\multicolumn{1}{c|}{PLC \cite{zhang2021learning}} & 82.80 ± 0.27 & 42.74 ± 2.14 & 79.04 ± 0.50 & 72.21 ± 2.92 & 78.31 ± 0.41 \\ \hline
\rowcolor{LightRed} \multicolumn{1}{c|}{SimCLR KNN} & 83.71 & 29.45 & 78.25 & 54.82 & 75.37 \\
\rowcolor{LightRed} \multicolumn{1}{c|}{C2D + SimCLR \cite{zheltonozhskii2022contrast}} & 83.84 ± 0.13 & 34.23 ± 0.45 & 85.61 ± 0.29 & 81.39 ± 0.68 & 83.06 ± 0.57 \\ 
\rowcolor{LightRed} \multicolumn{1}{c|}{LRA-diffusion (SimCLR)} & 88.76 ± 0.24 & 42.63 ± 1.97 & 88.41 ± 0.37 & 84.43 ± 0.82 & 85.64 ± 0.23 \\ \hline
\rowcolor{LightCyan} \multicolumn{1}{c|}{CLIP KNN} & 91.80 & 30.66 & 84.67 & 57.03 & 81.76 \\
\rowcolor{LightCyan} \multicolumn{1}{c|}{LRA-diffusion (CLIP)} & 96.54 ± 0.13 & 44.62 ± 0.18 & 95.71 ± 0.17 & 87.21 ± 0.71 & 93.65 ± 0.40 \\ \hline
\hline
\rowcolor{LightGray} & \multicolumn{5}{c}{\textbf{CIFAR-100}} \\ \cline{2-6}
\rowcolor{LightGray} \multirow{-2}{*}{\textbf{Methods}} & 35\% PMD & 70\% PMD & 35\% PMD + 30\% U & 35\% PMD + 60\% U & 35\% PMD + 30\% A \\ \hline
\multicolumn{1}{c|}{Standard} & 57.68 ± 0.29 & 39.32 ± 0.43 & 48.86 ± 0.56 & 35.97 ± 1.12 & 45.85 ± 0.93 \\
\multicolumn{1}{c|}{Co-teaching+} & 56.70 ± 0.71 & 39.53 ± 0.28 & 52.33 ± 0.64 & 27.17 ± 1.66 & 51.21 ± 0.31 \\
\multicolumn{1}{c|}{GCE} & 58.37 ± 0.18 & 40.01 ± 0.71 & 52.90 ± 0.53 & 38.62 ± 1.65 & 52.69 ± 1.14 \\
\multicolumn{1}{c|}{SL} & 55.20 ± 0.33 & 40.02 ± 0.85 & 51.34 ± 0.64 & 37.57 ± 0.43 & 50.18 ± 0.97 \\
\multicolumn{1}{c|}{LRT} & 56.74 ± 0.34 & 45.29 ± 0.43 & 45.66 ± 1.60 & 23.37 ± 0.72 & 52.04 ± 0.15 \\
\rowcolor{LLightGray} \multicolumn{1}{c|}{CC} & 59.44 ± 0.33 & 42.79 ± 1.21 &	56.58 ± 0.45 & 43.64 ± 1.71 & 54.45 ± 1.22 \\ 
\multicolumn{1}{c|}{PLC} & 60.01 ± 0.43 & 45.92 ± 0.61 & 60.09 ± 0.15 & 51.68 ± 0.10 & 56.40 ± 0.34 \\ \hline
\rowcolor{LightRed} \multicolumn{1}{c|}{SimCLR KNN} & 54.22 & 39.25 & 51.87 & 41.73 & 46.50 \\
\rowcolor{LightRed} \multicolumn{1}{c|}{C2D + SimCLR} & 69.28 ± 0.31 & 51.63 ± 0.53 & 59.87 ± 0.66 & 64.45 ± 1.41 &	65.65 ± 0.93 \\ 
\rowcolor{LightRed} \multicolumn{1}{c|}{LRA-diffusion (SimCLR)} & 61.39 ± 0.15 & 53.37 ± 0.81 & 60.52 ± 0.28 & 55.79 ± 0.31 & 59.28 ± 0.11 \\ \hline
\rowcolor{LightCyan} \multicolumn{1}{c|} {CLIP KNN} & 79.58 & 52.55 & 69.66 & 50.91 & 61.19 \\ 
\rowcolor{LightCyan} \multicolumn{1}{c|}{LRA-diffusion (CLIP)} & 81.91 ± 0.10 & 74.52 ± 0.12 & 82.80 ± 0.11 & 81.10 ± 0.09 & 81.78 ± 0.15 \\ \hline
\end{tabular}
}
\end{table*}

\subsection{Results on Synthetic Noisy Datasets} 
We conduct simulation experiments on the CIFAR-10 and CIFAR-100 datasets \cite{krizhevsky2009learning} to evaluate our method's performance under various noise types. Specifically, following \cite{zhang2021learning}, we test with {\em polynomial margin diminishing} (PMD) noise, a novel instance-dependent noise, at two noise levels and three hybrid noise types by adding {\em independent and identically distributed (i.i.d)} noises on top of instance-dependent noise. 

For instance-dependent noise, we adopt the recently proposed {\em polynomial margin diminishing} (PMD) noise \cite{zhang2021learning}. Following the original paper, we train a classifier ${\boldsymbol\eta}(x)$ using clean labels to approximate the probability mass function of the posterior distribution $p(y|\mathbf{x})$. Images are initially labeled as their most likely class $u_{\mathbf{x}}$ according to the predictions of ${\boldsymbol\eta}(x)$. Then, we randomly alter the labels to the second most likely class $s_{\mathbf{x}}$ for each image with probability: $p_{u_{\mathbf{x}}, s_{\mathbf{x}}} = - \frac{c}{2} \left [ {\boldsymbol\eta}_{u_{\mathbf{x}}}(\mathbf{x}) - {\boldsymbol\eta}_{s_{\mathbf{x}}} (\mathbf{x}) \right ]^2 + \frac{c}{2}$, 
where $c$ is a constant noise factor that controls the final percentage of noisy labels. Since corrupting labels to the second most likely class can confuse the "clean" classifier the most, it is expected to have the most negative impact on the performance of models learned with noisy labels. For PMD noise, we simulate two noise levels where $35\%$ and $70\%$ of the labels are corrupted.

For i.i.d noise, following \cite{patrini2017making, zhang2021learning}, we use a transition probability matrix $\mathbf{T}$ to generate noisy labels. Specifically, we corrupt the label of the $i$-th class to the $j$-th class with probability $T_{ij}$. We adopt two types of i.i.d noise in this study: (1) Uniform noise, where samples are incorrectly labeled as one of the other $(n-1)$ classes with a uniform probability $T_{ij}=\tau / (n-1)$ and $T_{ii} = 1 - \tau$, with $\tau$ the pre-defined noise level; (2) Asymmetric noise: we carefully design the transition probability matrix such that for each class $i$, the label can only be mislabeled as one specific class $j$ or remain unchanged with probability $T_{ij}=\tau$ and $T_{ii} = 1 - \tau$. In our experiment, we generated three types of hybrid noise by adding 30\%, 60\% uniform, and 30\% asymmetric noise on top of 35\% PMD noise.

We test our proposed label-retrieval-augmented diffusion model using two pre-trained encoders: (1) SimCLR \cite{chen2020simple}: We trained two encoders using the ResNet50 \cite{he2016deep} architecture on the CIFAR-10 and CIFAR-100 datasets through contrastive learning; (2) CLIP \cite{radford2021learning}: the model is pre-trained on a large dataset comprising 400 million image-text pairs. Specifically, we used the vision transformer \cite{dosovitskiy2020image} encoder (ViT-L/14) with pre-trained weights, the best-performing architecture in CLIP. For simplification, we refer to these configurations as LRA-diffusion (SimCLR) and LRA-diffusion (CLIP). We also investigated the performance of the KNN algorithm within the feature space defined by the SimCLR and CLIP encoders, denoted as SimCLR KNN and CLIP KNN respectively.

Table~\ref{tab:cifar-table} lists the performance of the {\em Standard} method (train classifier using noisy labels), our method, and baseline methods for learning from noisy labels. The results in white rows are borrowed directly from \cite{zhang2021learning}. We can see that using the SimCLR encoder in the LRA-diffusion method results in superior test accuracy on both CIFAR-10 and CIFAR-100 datasets compared to other baselines, without the need for additional training data. This is because the SimCLR encoder is trained in an unsupervised manner, making it immune to label noise, and it can effectively extract categorical features for accurate image retrieval. Therefore, when the correct labels dominate the label distribution in the neighborhood, training with the labels of the retrieved neighbor images allows the model to learn with more correct labels. C2D \cite{zheltonozhskii2022contrast} also utilizes a pre-trained SimCLR encoder for initialization, but label noise may still affect the feature space during training. In contrast, our method freezes the feature encoder, shielding the pre-trained features from noise. Results on CIFAR-10 demonstrate that when the pre-trained feature has high KNN accuracy, our method performs better. On CIFAR-100, where the SimCLR feature has lower KNN accuracy, C2D is more effective, as it can refine the feature space through training. Moreover, freezing the feature encoder allows for efficient integration of large pre-trained encoders like CLIP, saving us from the prohibitive computational cost of fine-tuning. Notably, incorporating the CLIP encoder into our method significantly improves test accuracy over our LRA-diffusion (SimCLR) due to its excellent representation capabilities. In fact, by performing KNN in the CLIP feature space alone was able to achieve accuracy surpassing all competing methods in most experiments. This allows for the use of more clean labels during training, thus result in even higher accuracy.

\subsection{Ablation Studies}
To evaluate the contribution of diffusion and the pre-trained features, we conducted ablation experiments using CARD \cite{han2022card}, SS-DDPM \cite{okhotin2023star}, and linear probing to incorporate pre-trained models. It is worth noting that the SimCLR model was trained on the same training set without access to external data. Results are given in Table \ref{tab:ablation}.
\begin{table}[!h]
\centering
\caption{\fontsize{9}{11}\selectfont Classification accuracy (\%) on CIFAR-10 and CIFAR-100 datasets with PMD noises using different combinations of model, pre-trained feature, and label.}
\label{tab:ablation}
\fontsize{8}{10}\selectfont
\resizebox{0.96\textwidth}{!}{%
\begin{tabular}{ccccccc}
\hline
\rowcolor{LightGray}  & & & \multicolumn{2}{c}{\textbf{CIFAR-10}} & \multicolumn{2}{c}{\textbf{CIFAR-100}} \\
\cline{4-7}
\rowcolor{LightGray} \multirow{-2}{*}{\textbf{Methods}} & \multirow{-2}{*}{\textbf{Feature space}} & \multirow{-2}{*}{\textbf{Label}} & \multicolumn{1}{c}{35\% PMD} & \multicolumn{1}{c}{70\% PMD} &  \multicolumn{1}{c}{35\% PMD} & \multicolumn{1}{c}{70\% PMD}  \\ \hline
\multicolumn{1}{c}{Linear probing} & \multicolumn{1}{c}{SimCLR} & noisy & \multicolumn{1}{c}{86.9} & \multicolumn{1}{c}{38.93} & \multicolumn{1}{c}{56.18} & 51.87 \\ 
\rowcolor{LLightGray} \multicolumn{1}{c}{Linear probing} & \multicolumn{1}{c}{SimCLR} & sample & \multicolumn{1}{c}{63.8} & \multicolumn{1}{c}{35.84} & \multicolumn{1}{c}{53.34} & 52 \\ 
\multicolumn{1}{c}{Linear probing} & \multicolumn{1}{c}{SimCLR} & mean & 86.27 & 39.94 & 55.95 & 52.61 \\ 
\rowcolor{LLightGray} \multicolumn{1}{c}{ResNet+Linear} & \multicolumn{1}{c}{SimCLR} & sample & 86.55 & 38.06 & 56.57 & 51.36 \\ 
\multicolumn{1}{c}{CARD} & \multicolumn{1}{c}{SimCLR} & sample & \multicolumn{1}{c}{75.08} & \multicolumn{1}{c}{34.35} & \multicolumn{1}{c}{52.03} & 32.67 \\
\rowcolor{LLightGray} \multicolumn{1}{c}{SS-DDPM (Dirichlet)} & \multicolumn{1}{c}{SimCLR} & sample & 88.13 & 40.11 & 59.3 & 51.67 \\ 
\multicolumn{1}{c}{LRA-diffusion (ours)} & \multicolumn{1}{c}{SimCLR} & sample & \multicolumn{1}{c}{\bf 88.96} & \multicolumn{1}{c}{\bf 42.63} & \multicolumn{1}{c}{\bf 61.38} & {\bf 53.57} \\ \hline
\multicolumn{1}{c}{Linear probing} & \multicolumn{1}{c}{CLIP} & noisy & \multicolumn{1}{c}{85.35} & \multicolumn{1}{c}{37.4} & \multicolumn{1}{c}{65.02} & 53.21 \\ 
\rowcolor{LLightGray} \multicolumn{1}{c}{Linear probing} & \multicolumn{1}{c}{CLIP} & sample & \multicolumn{1}{c}{95.61} & \multicolumn{1}{c}{40.17} & \multicolumn{1}{c}{63.98} & 58.53 \\ 
\multicolumn{1}{c}{Linear probing} & \multicolumn{1}{c}{CLIP} & mean & 96.19 & 35.19 & 69.05 & 62.76 \\ 
\rowcolor{LLightGray} \multicolumn{1}{c}{ResNet+Linear} & \multicolumn{1}{c}{CLIP} & sample & 88.72 & 43.26 & 59.78 & 51.47 \\ 
\multicolumn{1}{c}{CARD} & \multicolumn{1}{c}{CLIP} & sample & \multicolumn{1}{c}{79.72} & \multicolumn{1}{c}{33.57} & \multicolumn{1}{c}{47.1} & 23.45 \\ 
\rowcolor{LLightGray} \multicolumn{1}{c}{SS-DDPM (Dirichlet)} & \multicolumn{1}{c}{CLIP} & sample & 96.03 & 42.03 & 80.72 & 72.24 \\ 
\multicolumn{1}{c}{LRA-diffusion (ours)} & \multicolumn{1}{c}{CLIP} & sample & \multicolumn{1}{c}{\bf 96.55} & \multicolumn{1}{c}{\bf 44.51} & \multicolumn{1}{c}{\bf 81.92} & {\bf 74.58} \\ \hline
\end{tabular}
}
\end{table}

Linear probing with sampled labels yielded lower accuracy than using noisy labels or the mean of neighboring labels. This difference may be due to the linear layer's inability to yield stochastic outputs from a multimodal distribution. During training, conflicting gradient directions may arise if the model tries to predict different labels across gradient steps, which can impede learning. However, due to the mode coverage ability of the diffusion model, our method can effectively learn from retrieval-augmented labels to generate different labels with different probabilities. We also test a baseline using an additional ResNet encoder along with the linear layer to mimic our model architecture shown in Figure \ref{fig:model}. The results are comparable with linear probing with sampled labels. Our model significantly outperforms CARD, mainly due to the more informative $f_p$ encoder. We use the same model architecture for SS-DDPM, which uses Dirichlet distributions for noisy states. SS-DDPM performed slightly worse than our method, indicating that constraining noisy states within probability simplex may not benefit our task.

Additional ablation studies are included in the Supplementary \ref{Appendix-ablation}. The results demonstrate that in the absence of a pre-trained encoder, our model can leverage the features of a noisy classifier to enhance its accuracy. We also include results of ablation experiments using different approaches for incorporating the CLIP model. The results indicate that the superior performance of our method is not solely attributable to the strength of the CLIP features. In conclusion, our LRA-diffusion model provides an efficient approach for incorporating pre-trained encoders for learning from noisy labels.

\subsection{Results on Real-world Noisy Datasets}
We further evaluate the performance of our proposed method on real-world label noise. Following previous work \cite{li2020dividemix, zhang2021learning, zhao2022centrality, cordeiro2023longremix}, we conducted experiments on four image datasets, \textit{i.e.}, WebVision \cite{li2017webvision}, ImageNet ILSVRC12 \cite{russakovsky2015imagenet}, Food-101N \cite{lee2018cleannet}, and Clothing1M \cite{xiao2015learning}. For experiments on Webvision, ILSVRC12, and Food-101N datasets, we use the CLIP image encoder as the $f_p$ encoder to train LRA-diffusion models. Comprehensive dataset description and implementation details can be found in the Supplementary \ref{Appendix-training_detail}. We evaluated the performance of our method against a group of state-of-the-art (SOTA) methods. The results are presented in Table \ref{tab:WebVision-table} and Table \ref{tab:Food-table}. Our approach significantly outperforms all the previous methods in terms of classification accuracy. It is important to highlight that EPL \cite{ko2023efficient} incorporates the most powerful CLIP and ConvNext-XL \cite{liu2022convnet} encoders and cooperates with other SOTA methods such as ELR \cite{liu2020early}, DivideMix \cite{li2020dividemix}, and UNICON \cite{karim2022unicon}. However, our method outperforms EPL by achieving $\sim$6\% higher accuracy on WebVision and ILSVRC12 datasets. This improvement over EPL demonstrates that developing better ways to incorporate pre-trained models to facilitate learning from noisy labels is a non-trivial task, highlighting the valuable contribution of our approach.

\begin{table}[!ht]
\centering
\caption{Classification accuracies (\%) on WebVision, ILSVRC2012 datasets.}
\fontsize{9}{12}\selectfont
\label{tab:WebVision-table}
\resizebox{1.0\textwidth}{!}{
\begin{tabular}{cccccccccc}
\hline
\rowcolor{LightGray} \textbf{Dataset} & DivideMix & ELR & UNICON & EPL & LongReMix & C2D & CC & NCR & LRA-diffusion \\ \hline
\textbf{WebVision} & 77.32 & 77.78 & 77.60 & 78.77 & 78.92 & 79.42 & 79.36 & 80.5 & \textbf{84.16} \\ 
\rowcolor{LLightGray} \textbf{ILSVRC2012} & 75.20 & 70.29 & 75.29 & 76.51 & - & 78.57 & 76.08 & - & \textbf{82.56} \\ \hline
\end{tabular}
}
\vspace*{-0.2cm}
\end{table}
\begin{table}[!ht]
\centering
\caption{Classification accuracies (\%) on the Food-101N dataset.}
\fontsize{8}{11}\selectfont
\label{tab:Food-table}
\resizebox{0.9\textwidth}{!}{
\begin{tabular}{cccccccc}
\hline
\rowcolor{LightGray} Standard & CleanNet \cite{lee2018cleannet} & BARE \cite{patel2023adaptive} & DeepSelf \cite{han2019deep} & PLC & LongReMix & \multicolumn{2}{c}{LRA-diffusion} \\ \hline
 81.67 & 83.95 & 84.12 & 85.10 & 85.28 & 87.39 & \multicolumn{2}{c}{\textbf{93.42}} \\ \hline
\end{tabular}
}
\end{table}

For experiments on the Clothing1M dataset, we found that LRA-diffusion conditioned on the CLIP image encoder did not achieve the SOTA accuracy. A potential explanation is that the CLIP feature is too general for this domain specific task for categorizing fashion styles. However, our method is orthogonal to most traditional learning with noisy label approaches. As shown in the additional ablation study in Supplementary \ref{sec:ablation-cls}, our method can collaborate with a trained classifier by conditioning on its feature encoder to achieve improved performance. We first use the CC \cite{zhao2022centrality} method to select clean samples and train a ResNet50 classifier, which achieved $75.32\%$ accuracy (refer to as $\text{CC}^{*}$). Then, we condition on its feature before the classification head to train our LRA-diffusion model on the selected samples, which achieved $75.70\%$ accuracy. As Table \ref{tab:Clothing1M-table} shows, our method achieved a 0.38\% improvement based on $\text{CC}^{*}$ and beat all SOTA methods.

\begin{table}[!ht]
\centering
\caption{Classification accuracies (\%) on Clothing1M}
\fontsize{8}{11}\selectfont
\label{tab:Clothing1M-table}
\resizebox{0.95\textwidth}{!}{
\begin{tabular}{cccccccc}
\hline
\rowcolor{LightGray} Standard & BARE & PLC & LongReMix & DeepSelf & C2D & NCR & CleanNet \\ \hline
68.94 & 72.28 & 74.02 & 74.38 & 74.45 & 74.58 & 74.60 &  74.69 \\ \hline
\hline
\rowcolor{LightGray} DivideMix & ELR & UNICON & EPL & $\text{CC}^{*}$ & CC & SANM \cite{tu2023learning} & LRA-diffusion \\ \hline
74.76 & 74.81 & 74.98 & 75.21 & 75.32 & 75.40 & 75.63 & \textbf{75.70} \\ \hline
\end{tabular}
}
\end{table} 

\subsection{Inference Efficiency Analysis}
In order to test the efficiency of our model, we perform experiments assessing the runtime on CIFAR-10 dataset and compare our method with a standard classifier that uses ResNet50. It's worth noting that our SimCLR encoder is also built on the ResNet50. Thus, the standard method's runtime also reflects the linear probing runtime on SimCLR. Table \ref{tab:runtime} shows the results. 
\begin{table}[!ht]
\centering
\caption{Inference time (s) of standard classifier and LRA-diffusion models on CIFAR-10 images.}
\label{tab:runtime}
\fontsize{8}{11}\selectfont
\resizebox{0.95\textwidth}{!}{
\begin{tabular}{ccccc}
\hhline{-----}
\rowcolor{LightGray} &  & \multicolumn{3}{c}{\textbf{LRA-diffusion}} \\
\cline{3-5}
\rowcolor{LightGray} \multirow{-2}{*}{\textbf{Number of images}} & \multirow{-2}{*}{\textbf{Standard (ResNet50)}} & SimCLR (ResNet50) & CLIP (ViT-B/32) & CLIP (ViT-B/16) \\ \hline
10000 & 3.96 & 9.52 & 9.13 & 17.12 \\
\rowcolor{LLightGray} 50000 & 20.77 & 41.31 & 39.82 & 92.34 \\ \hline
\end{tabular}
}
\label{tab:my-table}
\end{table}

We can see, the computation bottleneck lies on the large pre-trained encoder but not the diffusion model itself. In general, our method takes twice as long as a standard classifier (ResNet50) when using SimCLR (ResNet50) and CLIP (ViT-B/32) pre-trained encoders. Larger CLIP encoders can increase the time further. However, it can be further accelerated if the features can be pre-computed in advance or be computed in parallel (as they are only required to be computed once and can be reused later).  

\section{Limitations}
\vspace{-0.2cm}
Our method, while being effective in many scenarios, does have certain limitations that we acknowledge. Its performance enhancement can be compromised if a pre-trained $f_p$ feature encoder isn't available or is inadequately trained. Additionally, the diffusion model introduces Gaussian noise in the forward process, leading to latent label vectors not being confined within the probability simplex, which could increase training time. Lastly, our method's performance becomes less effective when label noise levels surpass 50\%. However, supervised learning can not be the optimal choice in such situations.
\vspace{-0.2cm}

\section{Conclusion}
\vspace{-0.2cm}
In this paper, by viewing the noisy labeling process as a conditional generative process, we leverage diffusion models to denoise the labels and accurately capture label uncertainty. A label-retrieval-augmented diffusion model was proposed to effectively learn from noisy label data by incorporating the principle of neighbor consistency. Additionally, by incorporating auxiliary conditional information from large pre-trained models such as CLIP, we are able to significantly boost the model performance. The proposed model is tested on several benchmark datasets, including CIFAR-10, CIFAR-100, Food-101N, and Clothing1M, achieving state-of-the-art results in most experiments. Future work could extend our model to multi-label settings, as it does not require a one-hot representation for labels. It is also promising to use semantic segmentation to guide the generation, potentially enhancing our model's interpretability and performance.

\paragraph{Acknowledgement:} This work is partially supported by NSF AI Institute-2229873, NSF RI-2223292, an Amazon research award, and an Adobe gift fund. Any opinions, findings and conclusions or recommendations expressed in this material are those of the author(s) and do not necessarily reflect the views of the National Science Foundation, the Institute of Education Sciences, or the U.S. Department of Education.

\bibliographystyle{unsrt}

\appendix
\onecolumn
\counterwithin{figure}{section}
\counterwithin{table}{section}
\counterwithin{equation}{section}
\fontsize{9}{11}\selectfont

\section{Denoising Diffusion Implicit Model with non-zero mean latent space}
\label{Appendix-A}
The forward process of a diffusion process with non-zero mean latent distribution $\mathbf{y}_T \sim \mathcal{N}(f(\mathbf{x}), \mathbf{I}))$ has a closed form representation:
\begin{align}
\label{eq:forward_n_step}
    & q(\mathbf{y}_{t} |\mathbf{y}_{0}, f) = \mathcal{N}(\mathbf{y}_{t}; \sqrt{\bar{\alpha}_t} \mathbf{y}_{0} + (1-\sqrt{\bar{\alpha}_t})f(\mathbf{x}), (1 - \bar{\alpha}_t)\mathbf{I},
\end{align}
which could be reprameterized as:
\begin{equation}
    \mathbf{y}_t = \sqrt{\bar{\alpha}_t} \mathbf{y}_{0} + (1-\sqrt{\bar{\alpha}_t})f(\mathbf{x}) + \sqrt{1-\bar{\alpha}_t}\boldsymbol \epsilon 
\end{equation}

Then, similar to the DDIM, we define a Non-Markovian forward process with $\sigma_t \geq 0$, $t=1:T$. 
\begin{equation}
\label{eq:DDIM-joint}
    q_{\sigma}(\mathbf{y}_{1:T} | \mathbf{y}_{0}, f) := q_{\sigma}(\mathbf{y}_{T} | \mathbf{y}_{0}, f) \prod_{t=2}^T q_{\sigma}(\mathbf{y}_{t-1} | \mathbf{y}_{t}, \mathbf{y}_{0}, f)
\end{equation}
Where $q_{\sigma}(\mathbf{y}_{T} | \mathbf{y}_{0}, f) = \mathcal{N}(\mathbf{y}_{T}; \sqrt{\bar{\alpha}_T} \mathbf{y}_{0} + (1-\sqrt{\bar{\alpha}_T})f(\mathbf{x}), (1-\bar{\alpha}_T) \mathbf{I})$ and for all $t>1$.
\begin{align}
\label{eq:DDIM-step}
    q_{\sigma}(\mathbf{y}_{t-1} | \mathbf{y}_{t}, \mathbf{y}_{0}, f) &= \mathcal{N} \left ( \mathbf{y}_{t-1}; \sqrt{\bar{\alpha}_{t-1}} \mathbf{y}_{0} + (1-\sqrt{\bar{\alpha}_{t-1}})f(\mathbf{x}) + \sqrt{1-\bar{\alpha}_{t-1} - \sigma_t^2} \cdot \Tilde{\boldsymbol \epsilon}, \sigma_t^2 \mathbf{I} \right ),\\
    \Tilde{\boldsymbol \epsilon} &= \frac{1}{\sqrt{1-\bar{\alpha}_t}} \cdot \left ( \mathbf{y}_{t}-\sqrt{\bar{\alpha}_t} \mathbf{y}_{0} - (1-\sqrt{\bar{\alpha}_t})f(\mathbf{x}) \right ) \nonumber
\end{align}
We can prove that the sampling process defined by Eq. (\ref{eq:DDIM-joint}) and Eq. (\ref{eq:DDIM-step}) has the same {\em marginal distribution} as the closed-form sampling process in Eq. (\ref{eq:forward_n_step}) by the following Lemma:

\begin{lemma} \label{lemma-1}
For $q_{\sigma}(\mathbf{y}_{1:T} | \mathbf{y}_{0}, f)$ defined in Eq. (\ref{eq:DDIM-joint}) and $q_{\sigma}(\mathbf{y}_{t-1} | \mathbf{y}_{t}, \mathbf{y}_{0}, f)$ defined in Eq. (\ref{eq:DDIM-step}), we have:
\begin{equation}
\label{eq:lemma1}
    q_{\sigma}(\mathbf{y}_{t} | \mathbf{y}_{0}, f) = \mathcal{N}(\mathbf{y}_{t}; \sqrt{\bar{\alpha}_t} \mathbf{y}_{0} + (1-\sqrt{\bar{\alpha}_t})f(\mathbf{x}), (1-\bar{\alpha}_t) \mathbf{I})
\end{equation}
\end{lemma}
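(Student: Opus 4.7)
The plan is to prove Lemma \ref{lemma-1} by backward induction on $t$, starting from $t=T$ and descending to $t=1$, exploiting the fact that all distributions involved are Gaussian with linear mean dependencies. The base case $t=T$ is free: by the very definition of the non-Markovian joint in Eq.~(\ref{eq:DDIM-joint}), the marginal at step $T$ is exactly the Gaussian asserted in Eq.~(\ref{eq:lemma1}), so there is nothing to verify.

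For the inductive step, assume the claim holds at step $t$, and compute the marginal at step $t-1$ via
\begin{equation*}
q_{\sigma}(\mathbf{y}_{t-1}\mid \mathbf{y}_0, f) = \int q_{\sigma}(\mathbf{y}_{t-1}\mid \mathbf{y}_t, \mathbf{y}_0, f)\, q_{\sigma}(\mathbf{y}_t\mid \mathbf{y}_0, f)\, d\mathbf{y}_t.
\end{equation*}
By the inductive hypothesis the second factor is Gaussian, and by Eq.~(\ref{eq:DDIM-step}) the first factor is Gaussian with mean that is affine in $\mathbf{y}_t$. Because an affine-Gaussian mixture of Gaussians is Gaussian (standard fact; see e.g.\ \cite{bishop2006pattern}), it suffices to match the first two moments. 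I would compute the mean via the tower law and the variance via the law of total variance, rather than grinding the integral by completing the square.

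For the mean, notice that the noise proxy $\tilde{\boldsymbol\epsilon} = (1-\bar{\alpha}_t)^{-1/2}\bigl(\mathbf{y}_t - \sqrt{\bar{\alpha}_t}\mathbf{y}_0 - (1-\sqrt{\bar{\alpha}_t})f(\mathbf{x})\bigr)$ has expectation zero under the inductive hypothesis, so the $\sqrt{1-\bar{\alpha}_{t-1}-\sigma_t^2}\,\tilde{\boldsymbol\epsilon}$ term vanishes in expectation and leaves exactly $\sqrt{\bar{\alpha}_{t-1}}\mathbf{y}_0 + (1-\sqrt{\bar{\alpha}_{t-1}})f(\mathbf{x})$. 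For the variance, the inner contribution $\mathbb{E}[\mathrm{Var}(\mathbf{y}_{t-1}\mid \mathbf{y}_t,\mathbf{y}_0,f)] = \sigma_t^2 \mathbf{I}$, while the outer contribution picks up a factor $(1-\bar{\alpha}_{t-1}-\sigma_t^2)/(1-\bar{\alpha}_t)$ from the affine coefficient on $\mathbf{y}_t$, multiplied by $(1-\bar{\alpha}_t)\mathbf{I}$ from the inductive variance; these cancel to yield $(1-\bar{\alpha}_{t-1}-\sigma_t^2)\mathbf{I}$, and summing gives $(1-\bar{\alpha}_{t-1})\mathbf{I}$, as required.

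The only mildly tricky part is the bookkeeping of the affine coefficient on $\mathbf{y}_t$ inside the conditional mean, since the non-zero mean $f(\mathbf{x})$ complicates the usual DDIM derivation: the constants shifted by $f(\mathbf{x})$ must cancel precisely so that the final mean is the clean interpolation between $\mathbf{y}_0$ and $f(\mathbf{x})$ with weight $\sqrt{\bar{\alpha}_{t-1}}$. I would track the $f(\mathbf{x})$-dependent terms carefully at this one point; once they collapse, the induction closes and the marginal form propagates all the way down to $t=1$. This mirrors the original DDIM argument \cite{song2020denoising}, adapted to the shifted latent distribution centered at $f(\mathbf{x})$ rather than at the origin.
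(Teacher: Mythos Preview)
Your proposal is correct and follows essentially the same strategy as the paper: backward induction from $t=T$, with the base case given by definition and the inductive step obtained by marginalizing the Gaussian conditional against the Gaussian hypothesis at step $t$. The paper appeals to Bishop's formula (Eq.~2.115 in \cite{bishop2006pattern}) as a black box for the resulting mean and covariance, whereas you unpack the same computation via the tower law and the law of total variance; the arithmetic and the conclusion are identical.
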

\begin{proof}
Assume for any $t \leq T$, if Eq. (\ref{eq:lemma1}) is true, the following is also true:
\begin{equation}
\label{eq:induction-step}
    q_{\sigma}(\mathbf{y}_{t-1} | \mathbf{y}_{0}, f) = \mathcal{N}(\mathbf{y}_{t-1}; \sqrt{\bar{\alpha}_{t-1}} \mathbf{y}_{0} + (1-\sqrt{\bar{\alpha}_{t-1}})f(\mathbf{x}), (1-\bar{\alpha}_{t-1}) \mathbf{I}),
\end{equation}    
then we can prove the statement with an induction argument for $t$ from $T$ to $1$, since the base case $(t = T)$ already holds.\\
First, we have that:
\begin{equation}
    q_{\sigma}(\mathbf{y}_{t-1} | \mathbf{y}_0) := \int_{\mathbf{y}_t} q_{\sigma}(\mathbf{y}_{t} | \mathbf{y}_{0}, f) q_{\sigma}(\mathbf{y}_{t-1} | \mathbf{y}_{t}, \mathbf{y}_{0}, f) d\mathbf{y}_t,
\end{equation}
and 
\begin{align}
    q_{\sigma}(\mathbf{y}_{t} | \mathbf{y}_{0}, f) &= \mathcal{N}(\mathbf{y}_{t}; \sqrt{\bar{\alpha}_t} \mathbf{y}_{0} + (1-\sqrt{\bar{\alpha}_t})f(\mathbf{x}), (1-\bar{\alpha}_t) \mathbf{I})\\
    q_{\sigma}(\mathbf{y}_{t-1} | \mathbf{y}_{t}, \mathbf{y}_{0}, f) &= \mathcal{N} \left ( \mathbf{y}_{t-1}; \sqrt{\bar{\alpha}_{t-1}} \mathbf{y}_{0} + (1-\sqrt{\bar{\alpha}_{t-1}})f(\mathbf{x}) + \sqrt{1-\bar{\alpha}_{t-1} - \sigma_t^2} \cdot \Tilde{\boldsymbol \epsilon}, \sigma_t^2 \mathbf{I} \right ),\\
    \Tilde{\boldsymbol \epsilon} &= \frac{1}{\sqrt{1-\bar{\alpha}_t}} \cdot \left ( \mathbf{y}_{t}-\sqrt{\bar{\alpha}_t} \mathbf{y}_{0} - (1-\sqrt{\bar{\alpha}_t})f(\mathbf{x}) \right ) \nonumber.
\end{align} 
According to \cite{bishop2006pattern} Eq. (2.115), we have that $q_{\sigma}(\mathbf{y}_{t-1} | \mathbf{y}_{t}, \mathbf{y}_{0}, f)$ is Gaussian, with mean ${\boldsymbol \mu}_{t-1}$ and co-variance ${\boldsymbol \Sigma}_{t-1}$:
\begin{align}
    {\boldsymbol \mu}_{t-1} = & \sqrt{\bar{\alpha}_{t-1}} \mathbf{y}_{0} + (1-\sqrt{\bar{\alpha}_{t-1}})f(\mathbf{x}) \\
    & + \sqrt{1-\bar{\alpha}_{t-1} - \sigma_t^2} \left ( \frac{ \sqrt{\bar{\alpha}_t} \mathbf{y}_{0} + (1-\sqrt{\bar{\alpha}_t})f(\mathbf{x}) - \sqrt{\bar{\alpha}_t} \mathbf{y}_{0} - (1-\sqrt{\bar{\alpha}_t})f(\mathbf{x}) }{\sqrt{1-\bar{\alpha}_t}} \right ) \nonumber \\
    = & \sqrt{\bar{\alpha}_{t-1}} \mathbf{y}_{0} + (1-\sqrt{\bar{\alpha}_{t-1}})f(\mathbf{x}).
\end{align}
\begin{equation}
    {\boldsymbol \Sigma}_{t-1} = \sigma^2_t \mathbf{I} + \frac{1-\bar{\alpha}_{t-1} - \sigma_t^2}{1-\bar{\alpha}_t} (1-\bar{\alpha}_t) \mathbf{I} = (1-\bar{\alpha}_t) \mathbf{I}.
\end{equation}
Therefore, Eq. (\ref{eq:induction-step}) holds. Following the induction, the lemma is proved.
\end{proof}
In our implementation, we follow DDIM \cite{song2020denoising} setting $\sigma_t=0$. The resulting model becomes an implicit probabilistic model \cite{mohamed2016learning}, where the generation process become deterministic given $\mathbf{y}_T$.

\clearpage
\section{t-SNE visualization of the DDIM generation process}
\label{Appendix-B}
\begin{figure}[!ht]
    \centering
  \includegraphics[width=1\textwidth]{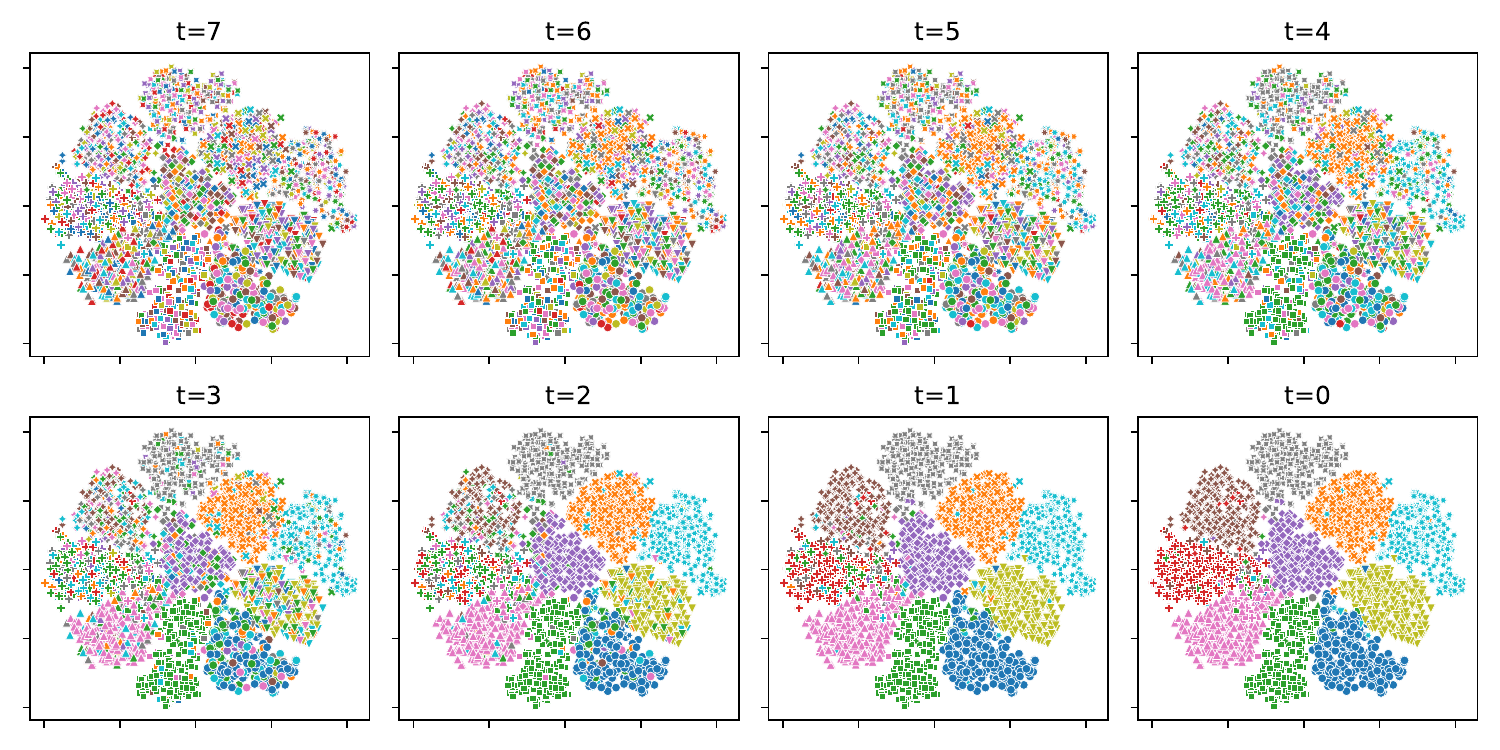}
  \caption{\fontsize{9}{12}\selectfont The t-SNE visualization of the CLIP feature space during the reverse generation process of a conditional diffusion model using an 7-step DDIM on the CIFAR-10 dataset. The process begins at time $t=7$, with the sampling of the latent representation of the label from the latent distribution $\mathcal{N}(f_q(\mathbf{x}), \mathbf{I})$. Through a series of multi-step reverse operations, the latent distribution is transformed into the conditional distribution of labels. The data points are color-coded according to the entry with the highest value in intermediate/final label vectors, and the ground truth class labels are represented by distinct markers.} 
  \label{fig:DDIM_demo}
\end{figure}

\section{Experimental setup and details}
\label{Appendix-training_detail}
\subsection{Real-world dataset details}
\textbf{WebVision} comprising 2.4 million images that were crawled using Google and Flickr search engines, with the ILSVRC12 taxonomy. Following prior studies, we trained our model on the initial 50 classes from the Google image subset of Webvision and tested it on the validation sets of both Webvision and ILSVRC12.

\textbf{Food-101N} consists of 310k food images collected from the internet with the Food-101 \cite{bossard2014food} taxonomy, and has an estimated label noise level of 20\%, making it an ideal dataset to evaluate the robustness of our method under real-world noisy labels. We assessed the classification accuracy on the curated label set of Food-101, which contains around 25k images. 

\textbf{Clothing1M} contains 1 million images of clothes obtained from shopping websites. Based on the keywords in the surrounding text, the images are automatically classified into 14 classes with $\sim$40\% estimated noise level. The dataset includes a clean training set, validation set, and test set with manually refined labels, consisting of approximately 47.6k, 14.3k, and 10k pictures, respectively. We discarded the clean training set and only used the noisy label data for training.

\subsection{Implementation details}\label{sec:implementation}
To present the hyperparameter settings of our neural network, we first give a description of our neural network design. As shown in Figure \ref{fig:model}, the network consists of a frozen $f_p$ encoder, a ResNet encoder, and a series of feed forward layers. Features encoded by the two encoders are combined with time embedding via hadamard product and passed through a series of feed-forward networks, batch normalization, and softplus activation to predict the noise term $\boldsymbol \epsilon_{\theta}$.
\begin{figure}[!ht]
    \centering
  \includegraphics[width=0.5\textwidth]{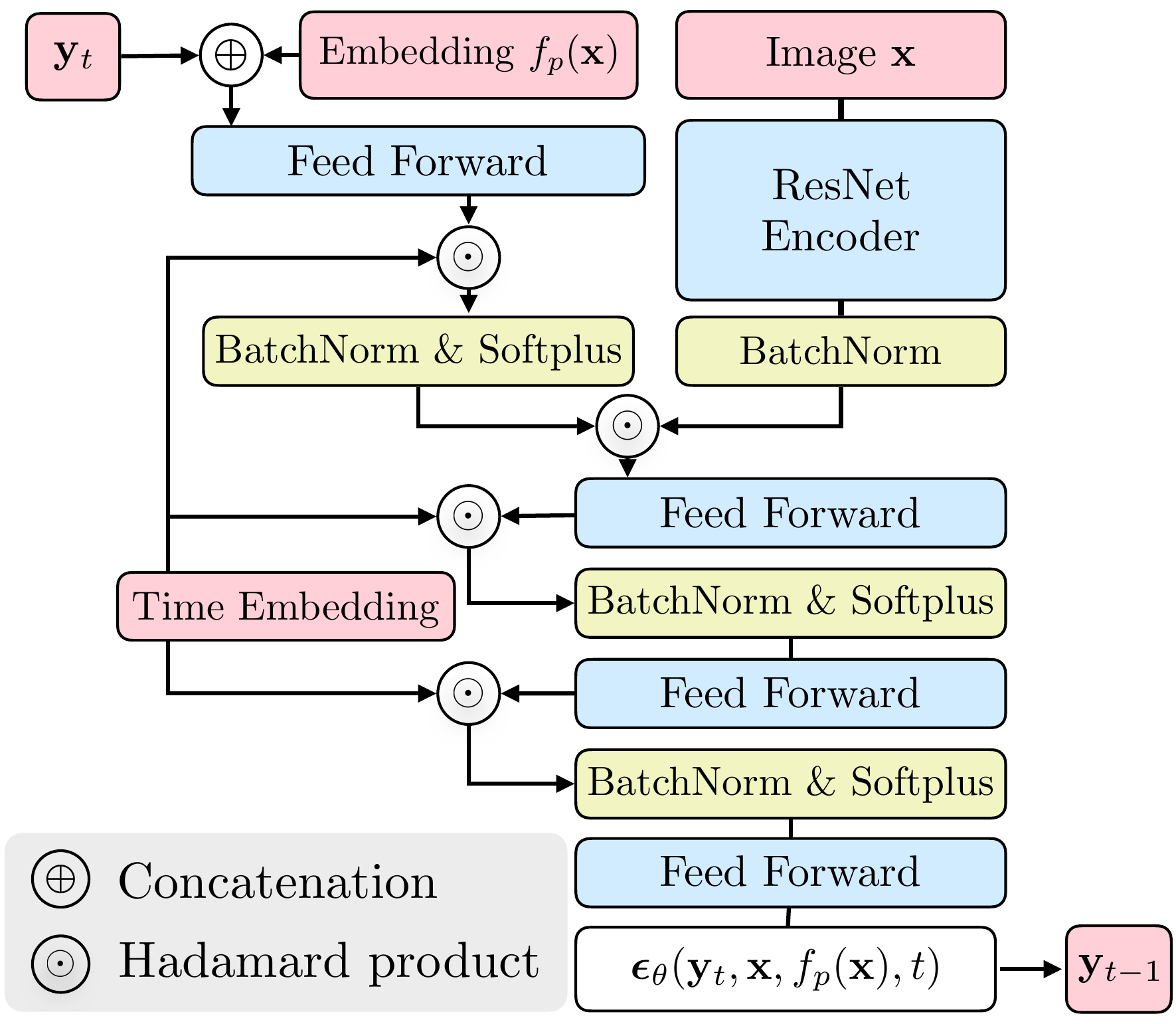}
  \vspace{-0.5em}
  \caption{\fontsize{9}{12}\selectfont The network architecture for conditional diffusion models. The input to the network consists of four elements: $\mathbf{y}_t$, $f_p(\mathbf{x})$, $\mathbf{x}$, and the time embedding for t, represented by pink blocks. The blue blocks in the figure represent the trainable network components.}
  \label{fig:model}
\end{figure}

In our experiments, we use ResNet34 for CIFAR10 and CIFAR100, and use ResNet50 for real-world datasets as the trainable encoder (blue ResNet block in Figure \ref{fig:model}). The dimensions of all feed-forward layers are set to 512 for CIFAR datasets and 1024 for real-world datasets, respectively. We train LRA-diffusion models for 200 epochs with Adam optimizer. The batch size is 256. We used a learning rate schedule that included a warmup phase followed by a half-cycle cosine decay. The initial learning rate is set to 0.001. Following \cite{zhang2021learning}, we applied data augmentation in the training, including resizing, random horizontal flip, and random cropping. To retrieve the nearest neighbors, we set k=10 based on our tests using a range of k values from 1 to 100 on the validation sets. The KNN accuracy remained relatively stable for k between 10 and 50, and then starts to decline due to reduced label consistency among neighbors. Based on these results, we infer that our LRA diffusion model is less sensitive to variations in $k$ within this range. All experiments are conducted using four NVIDIA Titan V GPUs.

\section{Additional ablation study}
\label{Appendix-ablation}
\subsection{Classifier feature conditioning for accuracy enhancement} \label{sec:ablation-cls}
To demonstrate how our method can enhance a trained classifier's performance by using its features as conditional information, we conduct an ablation study examining the impact of conditional diffusion and KNN on the trained classifier. Specifically, we train classifiers, denoted as $\eta(\mathbf{x})$, using the standard method at various noise levels. We then remove the classification head and utilize the remaining model $f_{\eta}$ as the $f_p$ encoders in our conditional diffusion models.

The experimental results shown in Figure \ref{fig:ablation} indicate that these techniques can improve test accuracy. We observe that when the noise level is below $55\%$, the conditional diffusion model (green) achieves a $\sim 1\%$ improvement over the standard method. Moreover, when the LRA method is applied concurrently (purple), test accuracy can be further enhanced. This improvement occurs because learning from neighbors' labels reduces the noise level during training, as evidenced by the comparison between KNN results (blue) and clean label percentage (gray).

However, when the noise level exceeds $55\%$, the use of diffusion and LRA-diffusion methods does not seem advantageous. This limitation arises because the distribution of labels in the neighborhood becomes too corrupted for KNN to effectively improve the proportion of clean labels during training, as illustrated by the intersection of the blue and gray curves in the figure. We argue this does not diminish the practical value of our method because a dataset with more than 50\% label noise is not meaningful in practice.

\begin{figure}[!ht]
    \centering
  \includegraphics[width=0.7\textwidth]{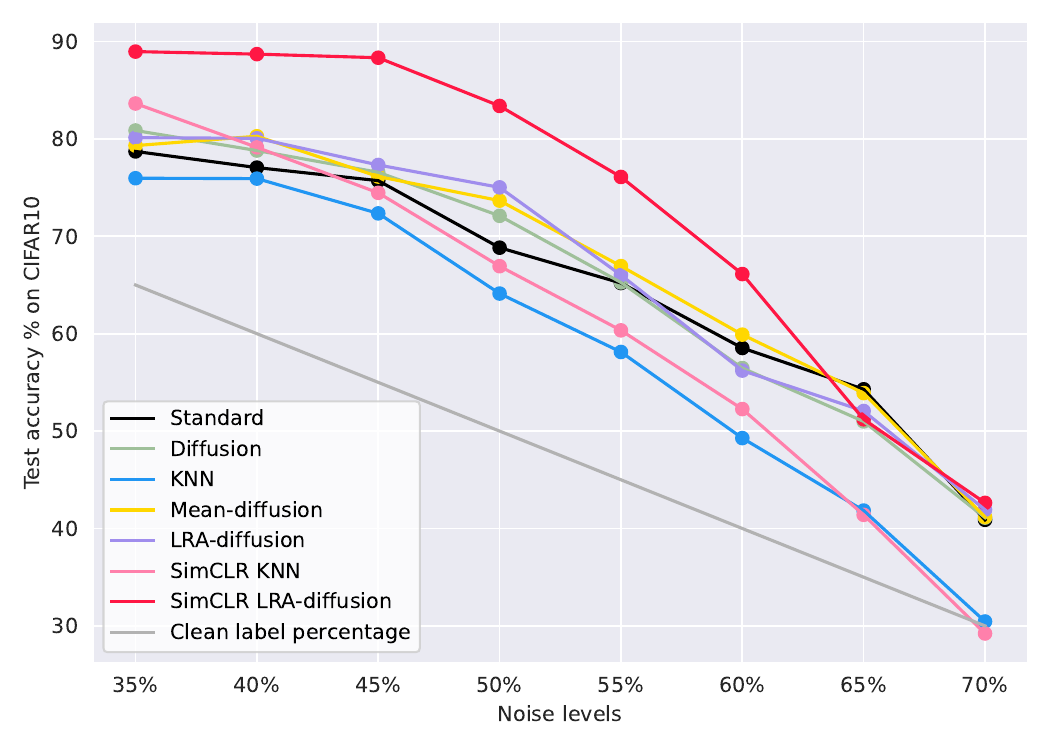}
  \caption{\fontsize{9}{12}\selectfont Test accuracy of seven methods on the CIFAR-10 dataset with different levels of PMD label noise. Other than the already introduced method names, here {\em Diffusion} is a conditional diffusion model using the feature $f_{\eta}$. The clean label percentage is represented by the gray line.}
  \label{fig:ablation}
\end{figure}

\subsection{Effects of pseudo-label construction strategies}
We also conduct comparative experiments using another method that utilizes neighbor labels: replacing the one-hot label vector with the mean vector of the neighbor's labels as the prediction target, which we call Mean-diffusion. We found that it can achieve higher accuracy when the noise level is higher than $55\%$. This may be due to the increase in the diversity of neighbor labels. The sampling-based LRA-diffusion will need to learn a more complex multi-modal distribution, but Mean-diffusion only needs to learn a point estimate. However, when the noise level is lower than $55\%$, we found that LRA-diffusion is slightly more accurate than Mean-diffusion. A possible explanation is that the distribution of $\mathbf{y}_0$ in LRA-diffusion contains only n one-hot labels. In contrast, $\mathbf{y}_0$ in Mean-diffusion is more diverse ($n^k/k!$ possible mean vectors for $n$ classes and $k$ neighbors). In conclusion, LRA-diffusion has higher performance with less noisy labels. On the other hand, Mean-diffusion has faster and more stable convergence and is more robust for high noise level. However, they tend to perform similarly when the noise level is too high or too low since neighbors' labels will become the same or too corrupted.

\subsection{Robustness of SimCLR feature conditioning}
Finally, we use the SimCLR model as the encoder $f_p$ in our conditional diffusion model (listed as SimCLR LRA-diffusion in Figure \ref{fig:ablation}), to showcase the effectiveness of our proposed LRA-diffusion method in utilizing prior knowledge from pre-trained image representations to enhance the test accuracy and robustness. The experimental results (red) show that its test accuracy significantly surpasses other settings until the noise level reaches $65\%$. Beyond this point, the labels in the neighborhood become too corrupted to provide additional supervision information.

\subsection{Effects of CLIP feature conditioning strategies} \label{sec:ablation-clip}
\begin{table}[!h]
\centering
\caption{\fontsize{9}{12}\selectfont Classification accuracy (\%) of linear probing, KNN, and diffusion model using pre-trained CLIP feature on real-world label noise datasets.}
\label{tab:abletion-clip}
\fontsize{7}{10}\selectfont
\resizebox{0.9\textwidth}{!}{%
\begin{tabular}{c|cccc}
\hline
\rowcolor{LightGray} \bf{Method} & \bf{Webvision} & \bf{ILSVRC2012} & \bf{Food-101N} & \bf{Clothing1M} \\ \hline
KNN & 81.88\% & 82.12\% & 91.73\% & 65.70\% \\ \hline
linear prob + Mean label & 68.56\% & 68.76\% & 90.68\% & 65.75\% \\ 
\rowcolor{LLightGray} linear prob + sample label & 54.84\% & 56.80\% & 89.56\% & 55.52\% \\ \hline
diffusion (Mean label) & 83.96\% & 82.24\% & 93.13\% & \textbf{71.79\%} \\
\rowcolor{LLightGray} diffusion (sample label) & \textbf{84.16\%} & \textbf{82.56\%} & \textbf{93.42\%} & 71.65\% \\  \hline
\end{tabular}
}
\end{table}

We carried out an ablation study employing various strategies to integrate the CLIP model for classification on real-world datasets. The results demonstrate that the superior performance of our diffusion-based method does not simply reflect the strength of the CLIP features. We show results of linear probing and KNN using CLIP features in Table \ref{tab:abletion-clip} to further demonstrate the effectiveness of our algorithm design. The results suggested that utilizing linear probing with LRA and mean labels degrades the pre-trained feature space, leading to diminished performance in comparison to the unsupervised KNN approach. On the other hand, the diffusion process can effectively incorporate the CLIP feature space to achieve higher performance.

\end{document}